\newcommand{\eps}{\varepsilon}
\DeclareMathOperator*{\argmax}{arg\,max}
\newcommand{\smallsection}[1]{{\noindent\textbf{#1.}}}
\title{Phrase Grounding by Soft-Label Chain Conditional Random Field}
\author{
Jiacheng Liu $\quad$ Julia Hockenmaier \\
University of Illinois at Urbana-Champaign, Urbana, IL, USA 61801 \\
{\tt \{jl25, juliahmr\}@illinois.edu}
}
\date{}
\begin{document}
\maketitle
\begin{abstract}
The phrase grounding task aims to ground each entity mention in a given caption of an image to a corresponding region in that image. Although there are clear dependencies between how different mentions of the same caption should be grounded, previous structured prediction methods that aim to capture such dependencies need to resort to approximate inference or non-differentiable losses. In this paper, we formulate phrase grounding as a sequence labeling task where we treat candidate regions as potential labels, and use neural chain Conditional Random Fields (CRFs) to model dependencies among regions for adjacent mentions. In contrast to standard sequence labeling tasks, the phrase grounding task is defined such that there may be multiple correct candidate regions. To address this multiplicity of gold labels, we define so-called Soft-Label Chain CRFs, and present an algorithm that enables convenient end-to-end training. Our method establishes a new state-of-the-art on phrase grounding on the Flickr30k Entities dataset. Analysis shows that our model benefits both from the entity dependencies captured by the CRF and from the soft-label training regime. Our code is available at \url{github.com/liujch1998/SoftLabelCCRF}
\end{abstract}

\section{Introduction}
\label{sec:introduction}

\begin{figure}[t]
    \centering
    \begin{subfigure}[b]{0.48 \columnwidth}
        \includegraphics[width=\textwidth]{./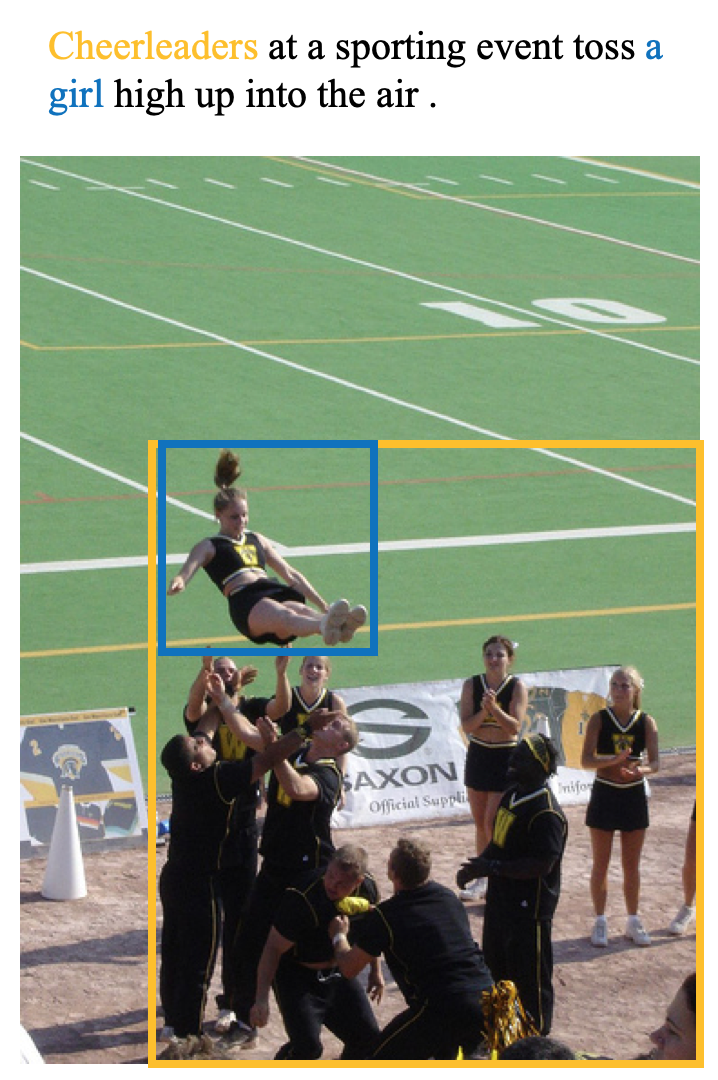}
        \caption{Dependency between entities. The visual relationship between grounding regions for \textcolor{orange}{``cheerleaders''} and \textcolor{blue}{``a girl''} should agree with context ``toss ... high up into the air''. }
        \label{fig:vis-rel}
    \end{subfigure}
    \begin{subfigure}[b]{0.48 \columnwidth}
        \includegraphics[width=\textwidth]{./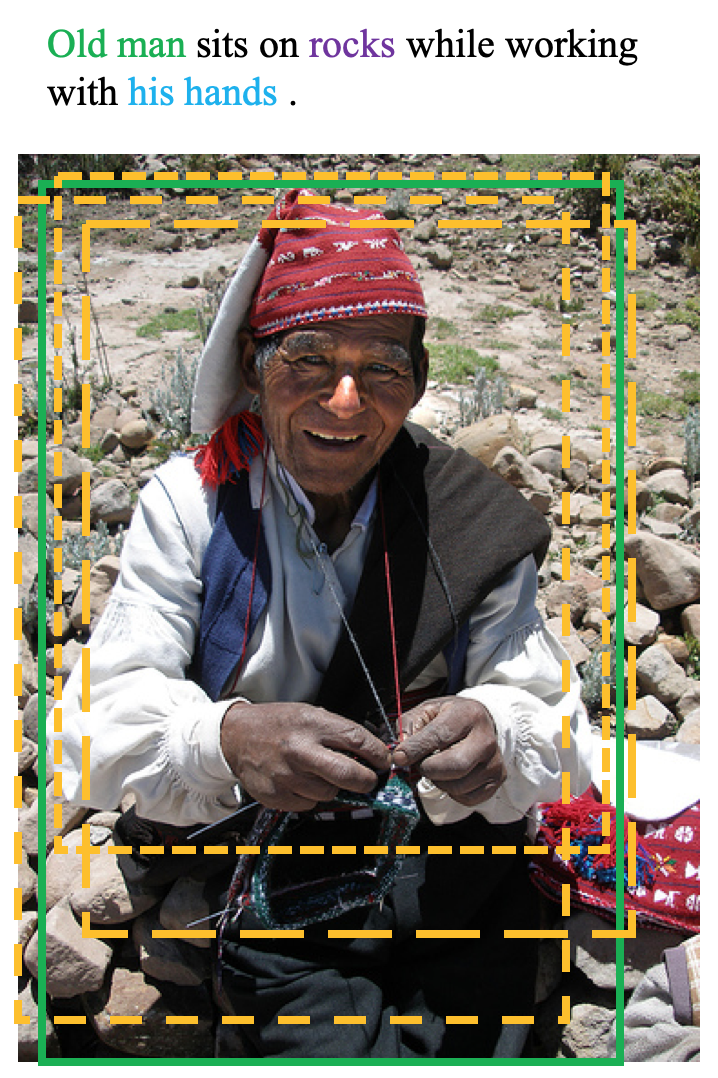}
        \caption{Gold label multiplicity. The \textcolor{green}{green box} is the annotated gold grounding region for entity phrase \textcolor{green}{``Old man''}, while the \textcolor{orange}{orange dash boxes} are region proposals with IoU $\ge 0.5$ with gold. }
        \label{fig:gold-mul}
    \end{subfigure}
    \caption{Example image-caption pairs from Flickr30k Entities, illustrating entity dependencies and gold label multiplicity. }
\end{figure}

\begin{figure}[t]
    \centering
    \begin{subfigure}[b]{0.48 \columnwidth}
        \includegraphics[width=\textwidth]{./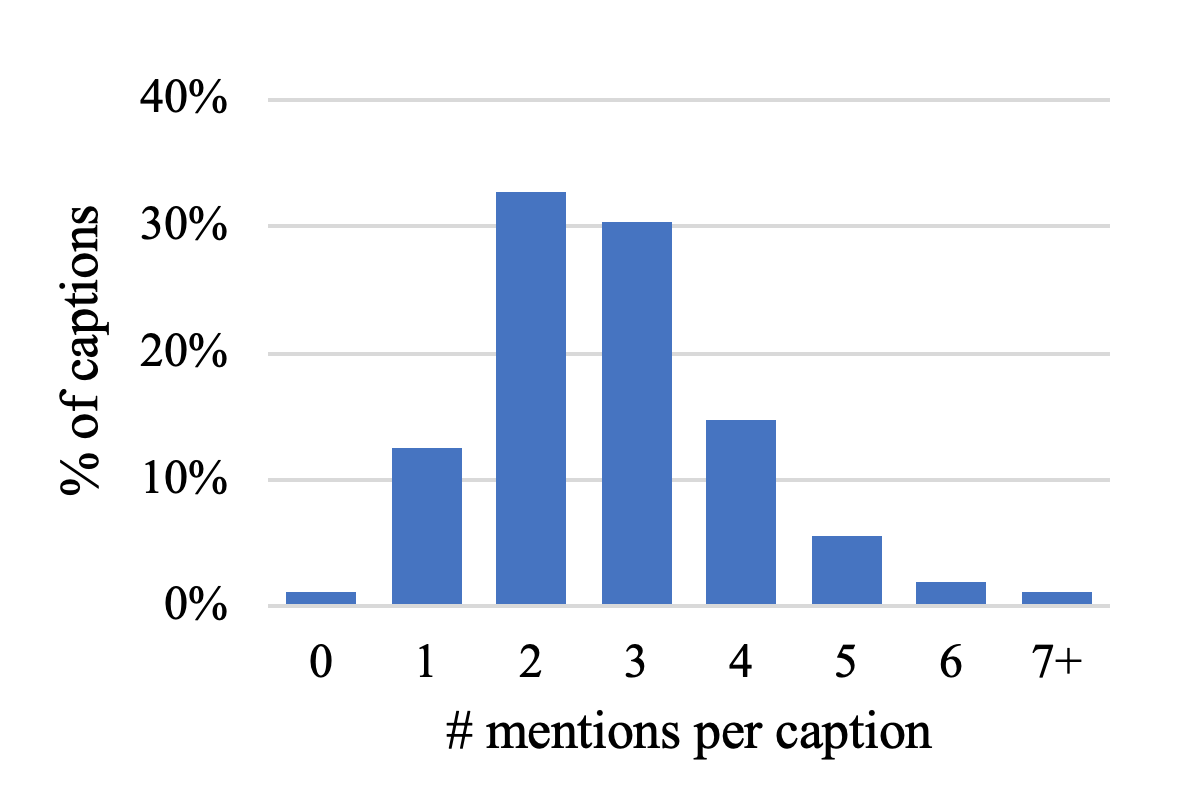}
        \caption{Distribution of number of entity phrases per caption. }
        \label{fig:men-cap}
    \end{subfigure}
    \begin{subfigure}[b]{0.48 \columnwidth}
        \includegraphics[width=\textwidth]{./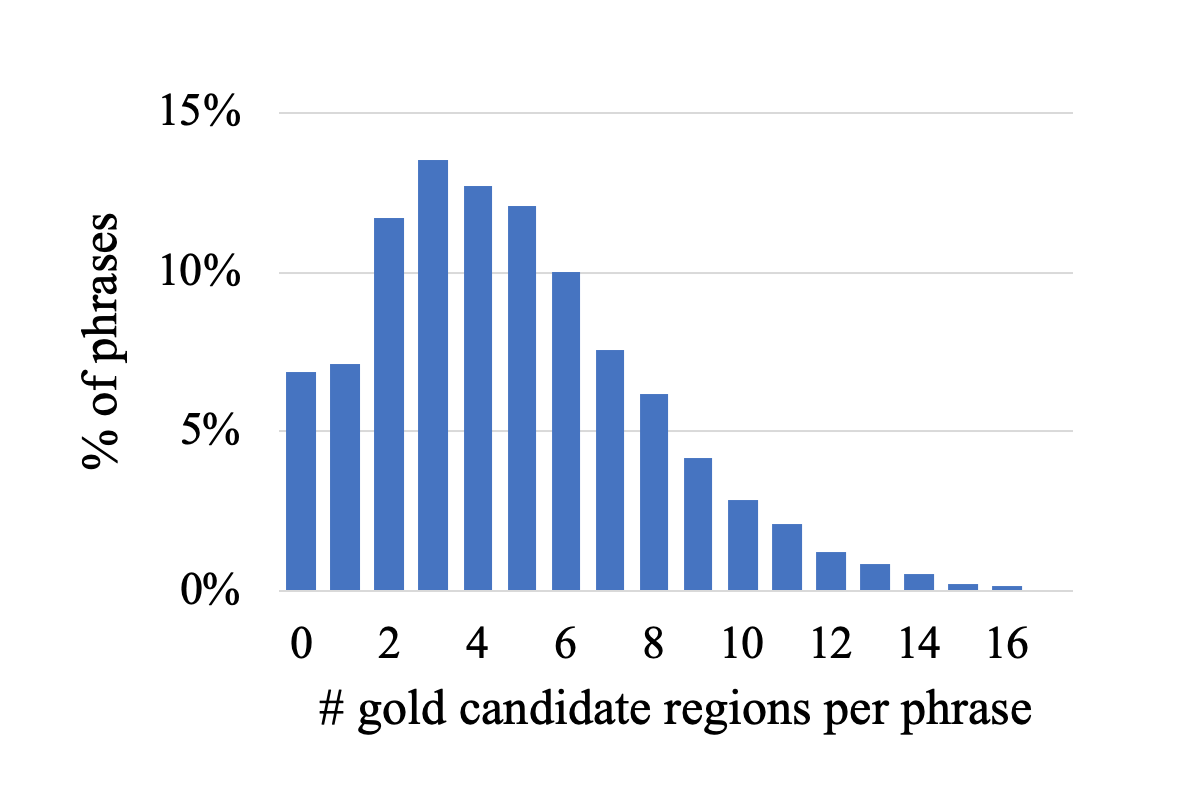}
        \caption{Distribution of number of gold labels per entity phrase. }
        \label{fig:reg-men}
    \end{subfigure}
    \caption{Validation set statistics for Flickr30k Entities. }
\end{figure}

Given an image and a corresponding caption, the phrase grounding task aims to ground each entity mentioned by a noun phrase in the caption to a region in the image. Phrase grounding has attracted much research interest due to its application in downstream tasks including image captioning \cite{DBLP:conf/nips/KarpathyJL14, DBLP:conf/cvpr/FangGISDDGHMPZZ15, DBLP:journals/pami/DonahueHRVGSD17, DBLP:conf/icml/XuBKCCSZB15}, image retrieval \cite{DBLP:conf/cvpr/ChenBFWN17, DBLP:conf/eccv/RadenovicTC16}, and visual question answering \cite{DBLP:journals/ijcv/AgrawalLAMZPB17, DBLP:conf/iccv/YuY0T17, DBLP:journals/tnn/YuYXFT18}. 

Phrase grounding systems typically work by ranking a set of candidate regions \cite{DBLP:conf/iccv/ChenKN17, DBLP:conf/ijcai/YuYXZ0T18}. Region proposals are generated from the image by a vision backbone model, without conditioning on the caption. Features of the phrase to be grounded are extracted, and subsequently interact with features of candidate regions, to determine phrase-region compatibility. Candidate regions are then ranked based on this compatibility metric, and the highest-scored candidate region is selected as the predicted grounding of the phrase. 

In Flickr30k Entities \cite{DBLP:journals/ijcv/PlummerWCCHL17}, each caption contains an average of 2.76 entity phrases to ground (Figure~\ref{fig:men-cap}; phrases with no corresponding gold regions are not counted). It therefore stands to reason that phrases in the same caption should not be grounded independently (to optimize each individual phrase-region assignment), but jointly (to optimize the global phrase-region assignment for the entire caption). Figure~\ref{fig:vis-rel} illustrates this phenomenon. The caption contains a sequence of two entity phrases, ``cheerleaders'' and ``a girl'', and the task is to label each phrase with a candidate region that best grounds it. Since there are several women present in the image, ``a girl'' has ambiguous grounding by itself, but it can be disambiguated by encouraging the visual relationship between ``a girl'' and ``cheerleaders'' to conform with context provided in the caption. 

Some works are aware that dependencies between entities in the same caption play an important role in building more accurate phrase grounding systems \cite{DBLP:conf/eccv/WangAKMD16, DBLP:conf/iccv/PlummerMCHL17, DBLP:conf/iccv/ChenKN17}. The success of these structured prediction methods shows the advantage of considering entity dependencies in learning and prediction. However, these approaches capture certain relations in an \textit{ad hoc} manner, and resort to approximate inference \cite{DBLP:conf/eccv/WangAKMD16, DBLP:conf/iccv/PlummerMCHL17} or non-differentiable losses \cite{DBLP:conf/iccv/ChenKN17}. 

To obtain models and inference algorithms that facilitate more globally consistent phrase grounding predictions, we propose to formulate phrase grounding as a sequence labeling task where we treat candidate regions as potential labels for the phrases in the input sequence. This allows us to build phrase grounding models based on Conditional Random Fields (CRFs) \cite{DBLP:conf/icml/LaffertyMP01} that capture entity dependencies in a universal and differentiable manner.  Our results indicate that systems that capture dependencies between phrases in the same caption in a principled manner outperform systems that ignore these dependencies. 

A second problem lies in the use of region proposals, which distinguishes phrase grounding from other sequence labeling tasks where CRFs are directly applicable. Following the metrics of object detection, in phrase grounding the correctness of a predicted region is judged by its overlap by Intersection-over-Union (IoU) with the gold region \cite{DBLP:journals/ijcv/PlummerWCCHL17}. To cover potential regions with high enough IoU, it is common to generate a myriad of region proposals and for these candidate regions to contain or substantially overlap with each other. As a result, there could be more than one candidate region with high IoU with the gold region, and they should all be considered as correct grounding for the phrase. This phenomenon of gold label multiplicity is illustrated in Figure~\ref{fig:gold-mul}. We hypothesize that it is important to consider gold label multiplicity and identify all correct region proposals during training, since the model would receive contradictory training signals if some correct proposals were marked as incorrect. With region proposals generated by a Bottom-Up Attention \cite{DBLP:conf/cvpr/00010BT0GZ18} visual backbone, in Flickr30k Entities each phrase has an average of 4.75 gold labels, and detailed statistics are presented in Figure~\ref{fig:reg-men}. To address this problem, we adopt the soft-label target distribution proposed by \citet{DBLP:conf/ijcai/YuYXZ0T18}, and our experiments show that models trained with this regime significantly outperform those trained with one-hot target regime. 

To combine the benefits brought by structured prediction from CRFs and by soft-label training regime, we define Soft-Label Chain CRFs, a variation of standard chain CRFs that allows us to work with gold label multiplicity. We adapt learning and inference algorithms from chain CRFs and develop an end-to-end training algorithm for our proposed model. 

We evaluate the effectiveness of Soft-Label Chain CRF on phrase grounding by conducting experiments on the Flickr30k Entities dataset \cite{DBLP:journals/ijcv/PlummerWCCHL17} and comparing grounding accuracy with strong baseline models, as well as with existing structured prediction methods and current state-of-the-art models. Experimental results show that our Soft-Label Chain CRF model outperforms its hard-label CRF counterpart by 2.43\%, a vanilla non-CRF soft-label model by 0.40\%, and the previous best results by about 1.4\%, demonstrating that both of our contributions, modeling phrase grounding as a sequence labeling task, and training with soft label targets, matter for this task. 

\section{Related Work}
\label{sec:related-work}

\smallsection{Phrase Grounding} The phrase grounding task was first postulated by \citet{DBLP:journals/pami/KarpathyF17} and \citet{DBLP:journals/ijcv/PlummerWCCHL17}, both of which moved from the holistic image captioning to the finer-grained task of matching regions with phrases in the caption. Datasets for this task include Flickr30k Entities \cite{DBLP:journals/ijcv/PlummerWCCHL17}, RefCOCO \cite{DBLP:conf/eccv/YuPYBB16}, and Visual Genome \cite{DBLP:journals/ijcv/KrishnaZGJHKCKL17}. The general framework of proposal-generation-ranking has become  adopted by most approaches to phrase grounding, and research in this area has focused on improving specific components of this framework. Our work can be viewed as an improvement to the training and prediction aspects. 

\smallsection{Structured Prediction in Phrase Grounding} We summarize some works that consider entity dependencies by structured prediction. Structured Matching \cite{DBLP:conf/eccv/WangAKMD16} formulates phrase grounding as a bipartite matching process between phrases and candidate regions, and encourages the spatial relationship between two grounding regions to conform to an extracted \emph{partial coreference} relation between their corresponding phrases. The resulting discrete optimization problem is then relaxed into a linear program to enable end-to-end training. Phrase-Region CCA \cite{DBLP:conf/iccv/PlummerMCHL17} mines \emph{frequent patterns} of semantically related paired phrases and trains a separate model for each pattern. The addition of this pairwise score makes the optimization a quadratic programming problem that requires approximate inference. QRC Net \cite{DBLP:conf/iccv/ChenKN17} assumes that phrases in a caption refer to distinct entities, and thus predicted grounding regions are penalized for \emph{spatial overlapping}. However, overlapping regions can be penalized only after prediction, so this loss is not differentiable, and one has to resort to reinforcement learning. In these works, partial coreference extraction, frequent patterns mining and spatial overlap penalties are \textit{ad hoc} entity dependency capturing, while we aim to universally encompass the spectrum of such dependencies. 

\smallsection{Soft-Label Training Regime} Conventionally, region proposal ranking is done by predicting a probability distribution over all candidate regions for grounding a given entity phrase, which is learned to match a target distribution. \citet{DBLP:conf/iccv/ChenKN17} and \citet{DBLP:conf/eccv/RohrbachRHDS16} define the target distribution as a one-hot vector which only gives credit to the candidate region with highest IoU with the gold region, and cross-entropy loss is used as training objective. Under this hard-label training regime, the model is trained to pick only the best candidate region while rejecting all the inferior-than-best candidate regions, which is intuitively not a good behavior. \citet{DBLP:conf/ijcai/YuYXZ0T18} proposes a soft-label target distribution which gives weighted credit to all good candidate regions (i.e. those with above-threshold IoU with the gold region), and uses Kullback-Leibler (KL) divergence loss as training objective. 

\smallsection{Conditional Random Fields} CRFs \cite{DBLP:conf/icml/LaffertyMP01} are discriminative probabilistic models that have been found useful in sequence labeling tasks by capturing label dependencies \cite{DBLP:conf/acl/MaH16, DBLP:conf/naacl/LampleBSKD16}. We summarize some works relevant to CRFs learned in soft-label or multi-label settings. Multi-CRFs \cite{dredze2009sequence} learn CRFs with noisy annotated data, where annotators may disagree on the label for input tokens. The assumption is that there is always only one gold label for each token, so the model favors single label while conforming to the prior distribution of labels set by annotators. To work with soft-label targets, it employs a mode-seeking, exclusive KL divergence definition, which does not imply moment-matching, a desired property of CRFs (and in general, exponential family models) that we show in Section~\ref{ssec:sl-crf} and \ref{ssec:sl-ccrf} for the mean-seeking, inclusive KL divergence definition in our model. \citet{DBLP:journals/ml/RodriguesPR14} models the latent reliability of individual annotators, and use this information to guide the selection of trustworthy annotation sources and estimation of real gold labels. Note that both works always assume one gold label per input token, where the ambiguity comes from unreliability of annotations, while our work focuses on cases where there may be multiple gold labels per input token by the nature of the task. 

\section{Soft-Label Chain CRF}
\label{sec:sl-ccrf}

CRFs model the probability of a label sequence $\bm{y}\!=\!y^{1:T}$ conditioned on an input sequence $\bm{x}\!=\!x^{1:T}$ in terms of a score function $s(\bm{x},\bm{y})$: 
\begin{align*}
	p(\bm{y} | \bm{x})
		= \frac{\exp{s(\bm{y}, \bm{x})}}{\sum_{\bm{y}'}{\exp{s(\bm{y}', \bm{x})}}}
\end{align*}

For a given training example $\{(\bm{x}, \bm{y})\}$, the negative log-likelihood loss (i.e. cross-entropy loss w.r.t. a one-hot target distribution that gives credit to the gold label only) is
\begin{align*}
	L
		= -\log{p(\bm{y} | \bm{x})}
		= -s(\bm{y}, \bm{x}) + \log{Z(\bm{x})}
\end{align*}
where $Z(\bm{x}) = \sum_{\bm{y}'}{\exp{s(\bm{y}', \bm{x})}}$. The gradient of this loss w.r.t. score function is
\begin{align*}
	\frac{\partial{L}}{\partial{s(\bm{y}', \bm{x})}}
		= -\mathbb{I}(\bm{y}' = \bm{y}) + p(\bm{y}' | \bm{x})
\end{align*}
which is known as moment-matching. This allows us to train CRFs with gradient methods and conveniently connect to backpropagation when the score function is modeled by a neural architecture. 

\begin{algorithm*}
\caption{Modified forward algorithm to compute the KL divergence loss for Soft-Label Chain CRFs}
\begin{algorithmic}
\Procedure{SoftLabelChainCrfLoss}{$\bm{q}, \eps(y^t, \bm{x}), \tau(y^t, y^{t-1}, \bm{x})$}
	\ForAll {label $y^0$}
		\State $\alpha^0_{y^0} \gets 0$
		\State $g^0_{y^0} \gets 0$
	\EndFor
	\For {$t = 1 \hdots T$}
		\ForAll {label $y^t$}
			\State $\alpha^t_{y^t} \gets \sum_{y^{t-1}}{\Big\{\alpha^{t-1}_{y^{t-1}} \exp{\big[\tau(y^t, y^{t-1}, \bm{x}) + \eps(y^t, \bm{x})\big]}\Big\}}$
			\State $g^t_{y^t} \gets \sum_{y^{t-1}}{\Big\{\big[g^{t-1}_{y^{t-1}} + \tau(y^t, y^{t-1}, \bm{x})\big] q^{t-1}_{y^{t-1}} + \big[\eps(y^t, \bm{x}) - \log{q^t_{y^t}}\big]\Big\}}$
		\EndFor
	\EndFor
	\State $Z \gets \sum_{y^T}{\alpha^T_{y^T}}$
	\State $G \gets \sum_{y^T}{g^T_{y^T} q^T_{y^T}}$
	\State $L \gets -G + \log{Z}$
	\State \Return $L$
\EndProcedure
\end{algorithmic}
\label{alg:fwd}
\end{algorithm*}

\subsection{Soft-Label CRF}
\label{ssec:sl-crf}

In the standard CRF above, each input $x^t$ corresponds to a single gold label $y^t$. To account for gold label multiplicity in training stage, we replace the sequence of gold labels $\bm{y}$ with a sequence of distributions $\bm{q} = q^{1:T}$ where $q^t \in \mathbb{R}^K$ is the gold label distribution over all $K$ possible labels for input $x^t$. Note that this distribution should not be interpreted as the confidence of each label being correct; rather, it should be understood as a probabilistic gold label model: if we randomly choose a gold label, how likely is each label to be selected. With independence assumption, the gold probability of an arbitrary label sequence $\bm{y}$ is
\begin{align*}
	q(\bm{y} | \bm{x})
		= \prod_{t}{q(y^t | \bm{x})}
		= \prod_{t}{q(y^t | x^t)}
		= \prod_{t}{q^t_{y^t}}
\end{align*}
It is easy to see that $q(\bm{y} | \bm{x})$ is a distribution: 
\begin{align*}
    \sum_{\bm{y}}{q(\bm{y} | \bm{x})}
        = \sum_{\bm{y}}{\prod_t{q^t_{y^t}}}
        = \prod_t{\sum_{y^t}{q^t_{y^t}}}
        = 1
\end{align*}
And our goal is to learn this target distribution. 

Since this target distribution is no longer degenerate, we use Kullback-Leibler (KL) divergence to measure the discrepancy between the model and the target distribution. Our training objective is the KL divergence loss (in mean-seeking, inclusive form): 
\begin{align*}
	L
		= \sum_{\bm{y}}{\Big\{q(\bm{y} | \bm{x}) \log{\frac{q(\bm{y} | \bm{x})}{p(\bm{y} | \bm{x})}}\Big\}}
\end{align*}
which also gives gradients that demonstrate moment-matching: 
\begin{align*}
	\frac{\partial{L}}{\partial{s(\bm{y}', \bm{x})}}
		= -q(\bm{y}' | \bm{x}) + p(\bm{y}' | \bm{x})
\end{align*}
Note that if we had defined the KL divergence loss in its mode-seeking, exclusive form $\sum_{\bm{y}}{p(\bm{y} | \bm{x}) \log{\frac{p(\bm{y} | \bm{x})}{q(\bm{y} | \bm{x})}}}$, we would have lost this desired moment-matching property. 

\subsection{Factorization of Soft-Label Chain CRF}
\label{ssec:sl-ccrf}

Learning CRFs of general graphs requires inference in unit of cliques, which is usually computationally intractable. By restricting to local, pairwise potentials, we reduce the model to a first-order linear chain CRF, whose scoring function factorizes as
\begin{align*}
	s(\bm{y}, \bm{x})
		&= \sum_t{s(y^t, y^{t-1}, \bm{x})} \\
		&= \sum_t{\Big\{\tau(y^t, y^{t-1}, \bm{x}) + \eps(y^t, \bm{x})\Big\}}
\end{align*}
where $\tau(\cdot, \cdot, \cdot)$ is the transition score between labels at $t-1$ and $t$ that captures the dependency between labels for adjacent input tokens, and $\eps(\cdot, \cdot)$ is the emission score between label and input at $t$. 

Combining this factorization with soft-label targets gives the formal definition of Soft-Label Chain CRF. The loss can be written as
\allowdisplaybreaks
\begin{align*}
L\!
	&= \!\sum_{\bm{y}}\!{\Big\{q(\bm{y} | \bm{x}) \log{\frac{q(\bm{y} | \bm{x})}{p(\bm{y} | \bm{x})}}\!\Big\}} \\
	&= \!\sum_{\bm{y}}\!{\Big\{\!q(\bm{y} | \bm{x}) \big[\!\log{q(\bm{y} | \bm{x})} \!-\! s(\bm{y},\!\bm{x})\!+\!\log{Z(\bm{x})}\!\big]\!\Big\}} \\
	&\qquad (\text{Expand } p(\bm{y} | \bm{x}) \text{ by CRF modeling}) \\
	&= \!\sum_{\bm{y}}\!{\Big\{\!q(\bm{y} | \bm{x}) \big[\!\log{q(\bm{y} | \bm{x})} \!-\!s(\bm{y},\!\bm{x})\!\big]\!\Big\}}\!+ \!\log{Z(\bm{x})} \\
	&\qquad (\text{Marginalize } q(\bm{y} | \bm{x})) \\
	&= \!\sum_t\!{\sum_{y^t}\!{\Big\{\!q(y^t | \bm{x}) \log{q(y^t | \bm{x})}\!\Big\}}} \\
	&\qquad \!-\!\sum_{\bm{y}}\!{\Big\{\!q(\bm{y} | \bm{x}) s(\bm{y},\!\bm{x})\!\Big\}}\!+\!\log{Z(\bm{x})} \\
	&\qquad (\text{Independence of } q(y^t | \bm{x}) \text{ across } t) \\
	&= \!\sum_t\!{\sum_{y^t}\!{\Big\{\!q(y^t | \bm{x}) \log{q(y^t | \bm{x})}\!\Big\}}} \\
	&\qquad \!-\!\sum_{\bm{y}}\!{\Big\{\!q(\bm{y} | \bm{x}) \sum_t\!{s(y^t,\!y^{t-1},\!\bm{x})}\!\Big\}}\!+\!\log{Z(\bm{x})} \\
	&\qquad (\text{Factorization of } s(\bm{y},\!\bm{x})) \\
	&= \!\sum_t\!{\sum_{y^t}\!{\Big\{\!q(y^t | \bm{x}) \log{q(y^t | \bm{x})}\!\Big\}}} \\
	&\qquad \!-\!\sum_t\!{\sum_{y^t,\!y^{t-1}}\!{\Big\{\!s(y^t,\!y^{t-1},\!\bm{x}) \sum_{\bm{y} | y^t,\!y^{t-1}}\!{q(\bm{y} | \bm{x})}\!\Big\}}} \\
	&\qquad \!+\!\log{Z(\bm{x})} \\
	&\qquad (\text{Reorganize sums by } s(y^t,\!y^{t-1},\!\bm{x})) \\
	&= \!\sum_t\!{\sum_{y^t}\!{\Big\{\!q(y^t | \bm{x}) \log{q(y^t | \bm{x})}\!\Big\}}} \\
	&\qquad \!-\!\sum_t\!{\sum_{y^t,\!y^{t-1}}\!{\Big\{\!q(y^t,\!y^{t-1} | \bm{x}) s(y^t,\!y^{t-1},\!\bm{x})\!\Big\}}} \\
	&\qquad \!+\!\log{Z(\bm{x})}
\end{align*}
which gives moment-matching gradients
\begin{align*}
	\frac{\partial{L}}{\partial{s(y^t, y^{t-1}, \bm{x})}}
		&= -q(y^t, y^{t-1} | \bm{x}) + p(y^t, y^{t-1} | \bm{x}) \\
	\frac{\partial{L}}{\partial{\tau(y^t, y^{t-1}, \bm{x})}}
		&= -q(y^t, y^{t-1} | \bm{x}) + p(y^t, y^{t-1} | \bm{x}) \\
	\frac{\partial{L}}{\partial{\eps(y^t, \bm{x})}}
		&= -q(y^t | \bm{x}) + p(y^t | \bm{x})
\end{align*}
where
\begin{align*}
	q(y^t | \bm{x})
	    &= q^t_{y_t} \\
	q(y^t, y^{t-1} | \bm{x})
	    &= q^t_{y^t} q^{t-1}_{y^{t-1}}
\end{align*}
are the probability of local label(s) marginalized over all possible non-local labels. Smoothing inference $p(y^t | \bm{x})$ and $p(y^t, y^{t-1} | \bm{x})$ can be computed with forward-backward algorithm. 

\subsection{As an Extension of Soft-Label Model}
\label{ssec:sl}

Note that if we omit all transition terms in Soft-Label Chain CRF, the loss reduces to
\begin{align*}
	L'
		&= \sum_t{\sum_{y^t}{\Big\{q(y^t | \bm{x}) \big[{-\eps(y^t, \bm{x})} + \log{q(y^t | \bm{x})}\big]\Big\}}} \\
		&\qquad + \log{Z(\bm{x})} \\
		&= \sum_t{\sum_{y^t}{\Big\{q(y^t | \bm{x}) \log{\frac{q(y^t | \bm{x})}{p(y^t | \bm{x})}}\Big\}}}
\end{align*}
which is a total factorization over time. This is as if each label is predicted independently using a soft-label training regime, which is exactly the KL divergence loss proposed by \citet{DBLP:conf/ijcai/YuYXZ0T18}. Therefore, our Soft-Label Chain CRF can be viewed as an extension of this soft-label discriminative model. 

\subsection{Modified Forward Algorithm}

For chain CRFs, computing the loss only requires forward algorithm, while computing the gradients requires a full forward-backward algorithm. It can be proved that backpropagation on the loss gives the same result as running forward-backward. This is a commonly used trick in modern deep learning frameworks to eliminate the need of implementing the backward pass. Algorithm~\ref{alg:fwd} presents a modified forward algorithm that computes the loss for Soft-Label Chain CRF. In Section~1 and 2 of the Supplementary Materials, we prove the correctness of this algorithm, and that its backpropagation is also equivalent to forward-backward. 

\section{Phrase Grounding as Sequence Labeling}
\label{sec:pg-sl}

\subsection{Task Formulation}

We formulate phrase grounding as a sequence labeling task. Given an image $I$, a caption sentence $[c^1 \hdots c^L]$ where $c^l$ is a word token, and a set of non-overlapping noun phrase spans $[p^1 \hdots p^T]$ where $p^t = (s^t, e^t)$ denotes that the $t$'th phrase covers tokens $c^{s^t}$ to $c^{e^t}$ (inclusive), we generate a set of region proposals $\{r_1 \hdots r_K\}$, label each phrase with a candidate region, and refine the region by performing a bounding box regression. 

\subsection{Model Specification}
\label{ssec:model}

\begin{figure}[t]
    \centering
    \includegraphics[width=\columnwidth]{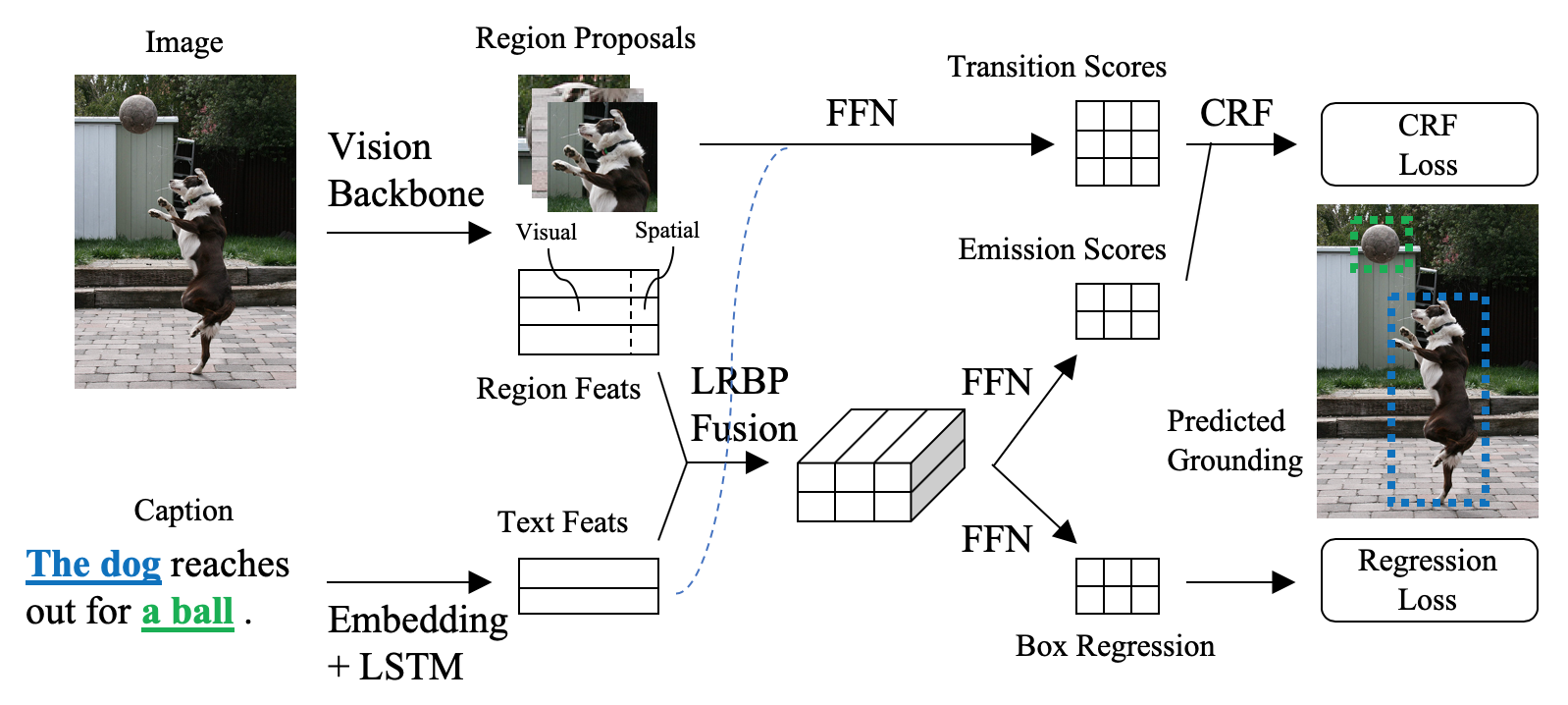}
    \caption{Our model for phrase grounding as a sequence labeling task. The $K\!\times\!K$ transition score matrix is derived from the features of $K$ region proposals. The $T\!\times\!K$ emission score matrix is derived from a joint representation of phrase-region pairs, which is fused from features of region proposals and $T$ entity phrases. Bounding box regression is applied to the sequence of regions predicted by the CRF. \textcolor{cyan}{Cyan dashed line}: contextualized transition score prediction (Section~\ref{ssec:model}). }
    \label{fig:model}
\end{figure}

\begin{figure}[t]
    \centering
    \includegraphics[width=\columnwidth]{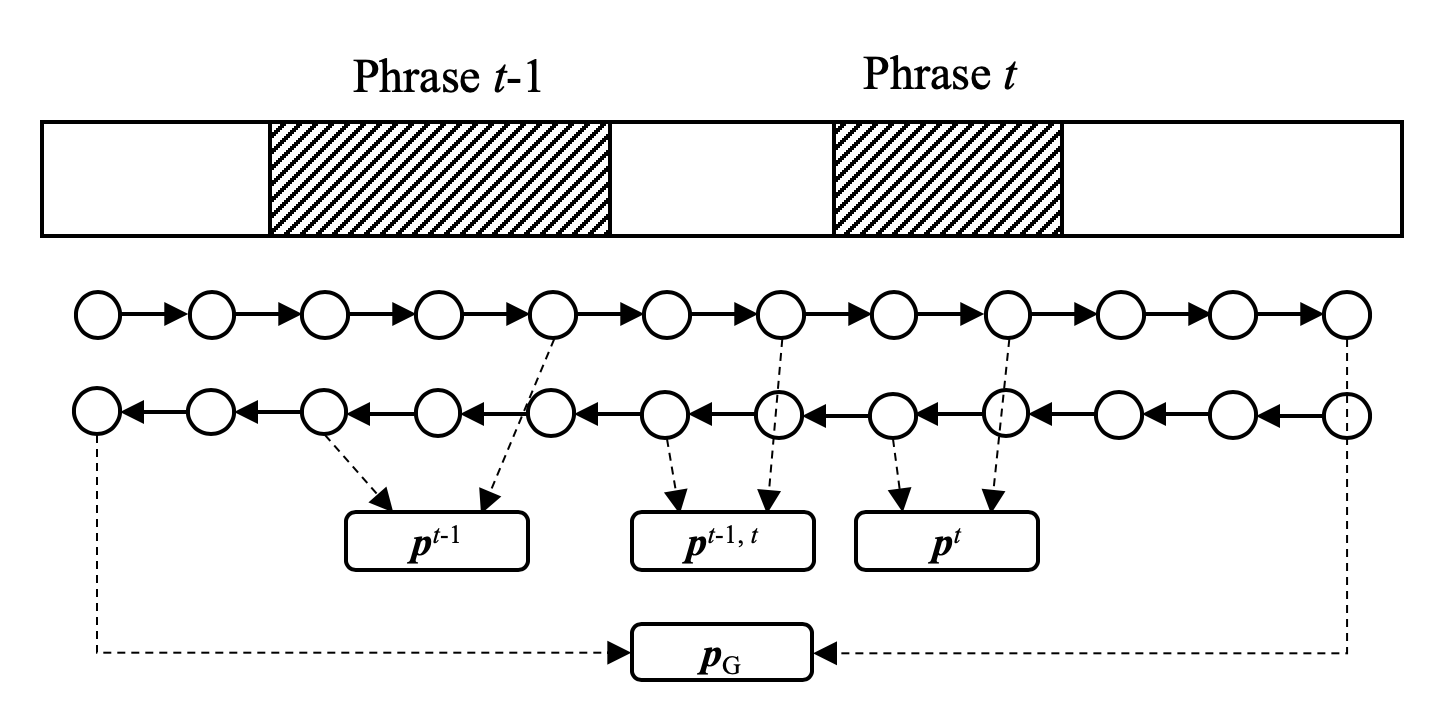}
    \caption{Text feature extraction for phrases in a caption. Shaded regions are entity phrase spans; circles represent LSTM cells. For phrase $t$ hidden states at its span boundaries are concatenated to form its text features $\bm{p}^t$, which is used in fusion with region features. For the contextualized transition score between phrases $t-1$ and $t$, hidden states at the boundaries of the context between them are concatenated into a context feature vector $\bm{p}^{t-1,t}$, which can be further extended by phrase features $\bm{p}^{t-1}$ and $\bm{p}^{t}$ as well as global text features $\bm{p}_G$. }
    \label{fig:textfeat}
\end{figure}

Figure~\ref{fig:model} outlines our phrase grounding model. $K$ region proposals and their visual and spatial features are extracted from an object detection vision backbone. We feed the token embeddings of the caption into a bi-directional LSTM \cite{DBLP:journals/neco/HochreiterS97}, and then concatenate the forward hidden state at the ending boundary of the phrase with the backward hidden state at the starting boundary of the phrase (see Figure~\ref{fig:textfeat}). This phrase representation captures context both preceding and following the phrase in the caption. 
\begin{align*}
	(\overrightarrow{\bm{h}^{1:L}}, \overleftarrow{\bm{h}^{1:L}})
		&= \text{BiLSTM}(\text{Embed}([c^1 \hdots c^L])) \\
	\bm{p}^t
		&= [\overrightarrow{\bm{h}^{e^t}} || \overleftarrow{\bm{h}^{s^t}}]
\end{align*}
We use low-rank bilinear pooling (LRBP) \cite{DBLP:conf/iclr/KimOLKHZ17} to fuse text and region features. Compared to simple concatenation, LRBP supports pairwise interaction between bimodal feature channels while keeping a reasonable computation overhead. Given a text feature vector $\bm{p}^t \in \mathbb{R}^{d_{\text{text}}}$ and a region feature vector $\bm{r}_k \in \mathbb{R}^{d_{\text{vis}}}$, LRBP fuses them into a joint representation $\bm{f}^t_k \in \mathbb{R}^{d_{\text{joint}}}$: 
\begin{align*}
	\bm{f}^t_k
		&= \bm{P}^\top (\bm{U}^\top \bm{p}^t \circ \bm{V}^\top \bm{r}_k) + \bm{b}
\end{align*}
where $\bm{U} \in \mathbb{R}^{d_{\text{text}} \times r}$, $\bm{V} \in \mathbb{R}^{d_{\text{vis}} \times r}$, pooling matrix $\bm{P} \in \mathbb{R}^{r \times d_{\text{joint}}}$, bias $\bm{b} \in \mathbb{R}^{d_{\text{joint}}}$, and $\circ$ is the Hadamard (i.e. element-wise) product. 

As discussed in Section~\ref{ssec:sl-ccrf}, the CRF score function consists of emission score and transition score. The emission score $\eps(r_k, p^t)$ models the compatibility between each phrase and each candidate region. We feed the joint representation to a single-layer feed-forward neural network: 
\begin{align*}
    \eps(r_k, p^t)
        &= \text{FFN}(\bm{f}^t_k)
\end{align*}
The transition score $\tau(r_k, r_{k'}, p^{1:T})$ is modeled by a two-layer feed-forward neural network with ReLU activation for the hidden layer: 
\begin{align*}
    \tau(r_k, r_{k'}, p^{1:T})
        &= \text{FFN}(\sigma(\text{FFN}([\bm{r}_k || \bm{r}_{k'}])))
\end{align*}
To condition the transition scores on local and global context from the caption, we can extend the input $[\bm{r}_k || \bm{r}_{k'}]$ with the following text features: context in between the two phrases (feature vector $\bm{p}^{t-1,t}$), context from phrase features $\bm{p}^{t-1}$ and $\bm{p}^t$, and global context $\bm{p}_G$. 

One important difference between the standard use of CRFs for sequence labeling and our task is that our "labels" do not correspond to a fixed set of classes that can be predicted for any input, but are as specific to the particular input example as the sequences to be labeled themselves. Hence, our transition and emission scores do not depend on the (arbitrary) indices of regions to be ground, but on their visual and spatial features (as well as on their corresponding linguistic contexts). Finally, although our approach could in principle be extended to higher-order CRFs, we restrict our attention here to first-order CRFs for computational efficiency. As a consequence, our models can only capture dependencies between string-adjacent phrases. 

\subsection{Training Objectives}

For each image-caption instance, the loss is a linear combination of the labeling and bounding box regression loss: 
\begin{align*}
	L &= L_{\text{label}} + \gamma L_{\text{reg}}
\end{align*}
$L_{\text{label}}$ is the CRF loss defined in Section~\ref{ssec:sl-ccrf}. $L_{\text{reg}}$  \cite{DBLP:journals/pami/RenHG017} is defined as
\begin{align*}
	L_{\text{reg}} &= (\beta, \hat{\beta}) &= \sum_{i \in \{x,y,w,h\}}{\text{SmoothL1}(\hat{\beta_i} - \beta_i)}
\end{align*}
with the ground truth regression parameterization
\begin{align*}
    \beta
        &= [\frac{x-x_a}{w_a}, \frac{y-y_a}{h_a}, \log{\frac{w}{w_a}}, \log{\frac{h}{h_a}}]
\end{align*}
and 
\begin{align*}
    \text{SmoothL1}(x) &= \begin{cases} 0.5x^2 & \text{if } |x| < 1 \\ |x|-0.5 & \text{otherwise} \end{cases}
\end{align*}

\section{Experiments}
\label{sec:experiments}

\begin{table*}[t]
    \centering
    \resizebox{\textwidth}{!}{
    \begin{tabular}{l l c}
        \hline
        \textbf{Method} & \textbf{Vision Backbone} & \textbf{Grounding Accuracy (\%)} \\
        \hline
        \emph{Compared methods} \\
        Structured Matching \cite{DBLP:conf/eccv/WangAKMD16} & Fast R-CNN \cite{DBLP:conf/iccv/Girshick15} & 42.08 \\
        Phrase-Region CCA \cite{DBLP:conf/iccv/PlummerMCHL17} & Fast R-CNN \cite{DBLP:conf/iccv/Girshick15} & 55.85 \\
        QRC Net \cite{DBLP:conf/iccv/ChenKN17} & Fast R-CNN \cite{DBLP:conf/iccv/Girshick15} & 65.14 \\
        BAN \cite{DBLP:conf/nips/KimJZ18} & Bottom-Up Attention \cite{DBLP:conf/cvpr/00010BT0GZ18} & 69.69 \\
        DDPN \cite{DBLP:conf/ijcai/YuYXZ0T18} & Bottom-Up Attention \cite{DBLP:conf/cvpr/00010BT0GZ18} & 73.3 \\
        \hline
        \emph{Our methods} \\
        Hard-Label (GloVe \cite{DBLP:conf/emnlp/PenningtonSM14}) & Bottom-Up Attention \cite{DBLP:conf/cvpr/00010BT0GZ18} & 71.88 \\
        Hard-Label (HL) & Bottom-Up Attention \cite{DBLP:conf/cvpr/00010BT0GZ18} & 72.21 \\
        Soft-Label (SL) & Bottom-Up Attention \cite{DBLP:conf/cvpr/00010BT0GZ18} & 74.29 \\
        Hard-Label Chain CRF (HL-CCRF) & Bottom-Up Attention \cite{DBLP:conf/cvpr/00010BT0GZ18} & 72.26 \\
        Soft-Label Chain CRF (SL-CCRF) & Bottom-Up Attention \cite{DBLP:conf/cvpr/00010BT0GZ18} & \textbf{74.69} \\
        \hline
    \end{tabular}
    }
    \caption{Performance of different phrase grounding methods on Flickr30k Entities (test set). Our CRF models has transition scores conditioned on features of context in between the two phrases (``M'' in Table~\ref{tab:acc-tran}). Our methods, unless explicitly specified, uses ELMo \cite{DBLP:conf/naacl/PetersNIGCLZ18} as word embeddings. }
    \label{tab:acc}
\end{table*}

\subsection{Experiment Setup}

\smallsection{Dataset} We train and evaluate our models on the Flickr30k Entities dataset \cite{DBLP:journals/ijcv/PlummerWCCHL17}, which contains $31,783$ images, each accompanied by $5$ captions. In keeping with previous work on this dataset, we assume that entity phrase boundaries are given, so inferring which phrases to ground is not part of our task. Following \citet{DBLP:journals/ijcv/PlummerWCCHL17}, we merge all regions that are ground to the same phrase into one larger bounding box, and split the dataset into $29,783$ training images, $1k$ validation images and $1k$ test images. 

We do not apply our method to RefCOCO \cite{DBLP:conf/eccv/YuPYBB16} or Visual Genome \cite{DBLP:journals/ijcv/KrishnaZGJHKCKL17} because they consist of independently grounded entity phrases without any entity dependencies that CRFs could leverage. 

\smallsection{Implementation details} For text feature extraction, we use the 1024-d contextualized word embeddings from the last layer of ELMo \cite{DBLP:conf/naacl/PetersNIGCLZ18}, followed by a bi-directional LSTM \cite{DBLP:journals/neco/HochreiterS97} encoder with hidden dimension $d_{\text{hidden}}\!=\!512$ for each direction, so that the text feature vector has dimension $d_{\text{text}}\!=\!1024$. We use the Bottom-Up Attention model \cite{DBLP:conf/cvpr/00010BT0GZ18} to generate region proposals and extract visual features, as in the state-of-the-art BAN \cite{DBLP:conf/nips/KimJZ18} and DDPN \cite{DBLP:conf/ijcai/YuYXZ0T18} models. $K\!=\!100$ region proposals are generated for each image. Each candidate region with coordinates $(x_{\text{min}}, y_{\text{min}})$,  $(x_{\text{max}}, y_{\text{max}})$ is represented by a $d_{\text{vis}} = 2053$ feature vector that consists of 2048-d visual features concatenated with 5-d spatial features $[x_{\text{min}}/W, y_{\text{min}}/H, x_{\text{max}}/W, y_{\text{max}}/H, wh/WH]$. The low-rank bilinear pooling (LRBP) layer used for text-region bimodal feature fusion  has rank $r\!=\!1024$ and output dimension $d_{\text{joint}}\!=\!1024$. We train with a mini-batch size of $16$ image-caption instances. Each instance contains all entity phrases to be grounded in the caption. Weights  are initialized with  Xavier  \cite{DBLP:journals/jmlr/GlorotB10}. We apply a dropout rate of $p\!=\!0.2$ after the word embedding layer, LSTM layer, and LRBP fusion layer. The loss weighting parameter $\gamma$ is $10.0$. All gradients are clipped by $\infty$-norm of $10.0$ to prevent gradient explosion. We do not fine-tune ELMo or the Bottom-Up Attention model. All models are trained for $50k$ iterations using  Adam  \cite{DBLP:journals/corr/KingmaB14} with learning rate $5e\!-\!5$ and $\beta_1\!=\!0.9, \beta_2\!=\!0.98$. Model snapshots are taken every $5k$ iterations and the model with the highest validation set accuracy is selected. 

\smallsection{Metrics} We predict one grounded region for each entity phrase. Following \citet{DBLP:journals/ijcv/PlummerWCCHL17}, a prediction is deemed accurate if it has at least 0.5 IoU overlap with the gold region. We report the percentage of accurately grounded phrases. 

\subsection{Quantitative Results}

\begin{table}
    \centering
    \resizebox{\columnwidth}{!}{
    \begin{tabular}{c c c}
        \hline
        \textbf{Model} & \textbf{Transition Context} & \textbf{Accuracy (\%)} \\
        \hline
        SL-CCRF & -- & 74.28 \\
        SL-CCRF & M & \textbf{74.69} \\
        SL-CCRF & M+LR & 74.45 \\
        SL-CCRF & M+LR+G & 74.48 \\
        \hline
    \end{tabular}
    }
    \caption{Performance of Soft-Label Chain CRF models by conditioning transition scores on different sets of context features. --: input to transition score prediction is $[\bm{r}_k || \bm{r}_{k'}]$. M: input extended by features of context $\bm{p}^{t-1,t}$ in between the two phrases. M+LR: input further extended by features of LHS phrase $\bm{p}^{t-1}$ and RHS phrase $\bm{p}^t$. M+LR+G: input further extended by features of global context $\bm{p}_G$. }
    \label{tab:acc-tran}
\end{table}

\begin{table}
    \centering
    \resizebox{\columnwidth}{!}{
    \begin{tabular}{c c c}
        \hline
        \textbf{Decoding Algorithm} & HL-CCRF & SL-CCRF \\
        \hline
        Viterbi (MAP) & 72.26 & 74.69 \\
        Smoothing & \textbf{72.30} & \textbf{74.73} \\
        \hline
    \end{tabular}
    }
    \caption{Decoding algorithms' impact on performance. }
    \label{tab:acc-decode}
\end{table}

We compare our Soft-Label Chain CRF model against three baselines: a Hard-Label non-CRF model, a Hard-Label CRF, and a Soft-Label non-CRF model. The non-CRF models ground each phrase independently with a loglinear model. 
The Hard-Label models are trained with a standard one-hot training regime. The Soft-Label models use the soft-label training regime described above. The Soft-Label non-CRF model corresponds to the reduced form of the Soft-Label Chain CRF in Section~\ref{ssec:sl}. 

Table~\ref{tab:acc} shows the performance of previous structured prediction models, current state-of-the-art models, our baseline models and the Soft-Label Chain CRF model. For a fair comparison with BAN \cite{DBLP:conf/nips/KimJZ18}, we also report result of the hard-label baseline with GloVe \cite{DBLP:conf/emnlp/PenningtonSM14} embeddings, while we obtain 0.33\% higher result with ELMo. 
Training a non-CRF model on soft-label target distributions improves  accuracy by a further 2.08\%. On top of that, Soft-Label Chain CRF improves accuracy by another 0.40\%, which shows the effectiveness of treating phrase grounding as a sequence labeling task and using CRFs to capture entity dependencies. We also observe that the Hard-Label Chain CRF outperforms the hard-label baseline by a mere margin of 0.05\%, so our conjecture is that using chain CRFs works well only with a suitable choice of training regime. Soft-Label Chain CRF gives an overall improvement of 2.48\% over the hard-label baseline; it significantly outperforms previous structured prediction models including Structured Matching \cite{DBLP:conf/eccv/WangAKMD16}, Phrase-Region CCA \cite{DBLP:conf/iccv/PlummerMCHL17} and QRC Net \cite{DBLP:conf/iccv/ChenKN17}, and surpasses the state-of-the-art  BAN \cite{DBLP:conf/nips/KimJZ18} and DDPN \cite{DBLP:conf/ijcai/YuYXZ0T18} models by a margin of 5.00\% and about 1.4\%, respectively. 

We conduct an ablation study to find the most appropriate combination of context features for the transition scores in the SL-CCRF model. Table~\ref{tab:acc-tran} shows that we obtain the best results by including the context in between the two phrases, which gives an improvement of 0.41\%. We did not see any benefit from adding further text features from the left and right side of the phrases, or from the entire caption. 

Besides the Viterbi decoding algorithm used in prediction in CRFs, we also experiment with a smoothing decoding algorithm. While Viterbi finds the MAP label sequence conditioned on the input sequence $\argmax_{\bm{y}}{p(\bm{y} | \bm{x})}$, smoothing decoding finds the best label for each input $x^t$: $\argmax_{y^t}{p(y^t | \bm{x})}$. This makes sense in some scenarios where we want to refine the predicted grounding of one entity by referring to the context instead of attempting to ground all entities mentioned in the description. Table~\ref{tab:acc-decode} shows that in both Hard-Label Chain CRF and Soft-Label Chain CRF, smoothing decoding gives a prediction accuracy 0.04\% higher than Viterbi decoding. 

\begin{figure*}[t]
    \centering
    \includegraphics[width=1.0 \textwidth]{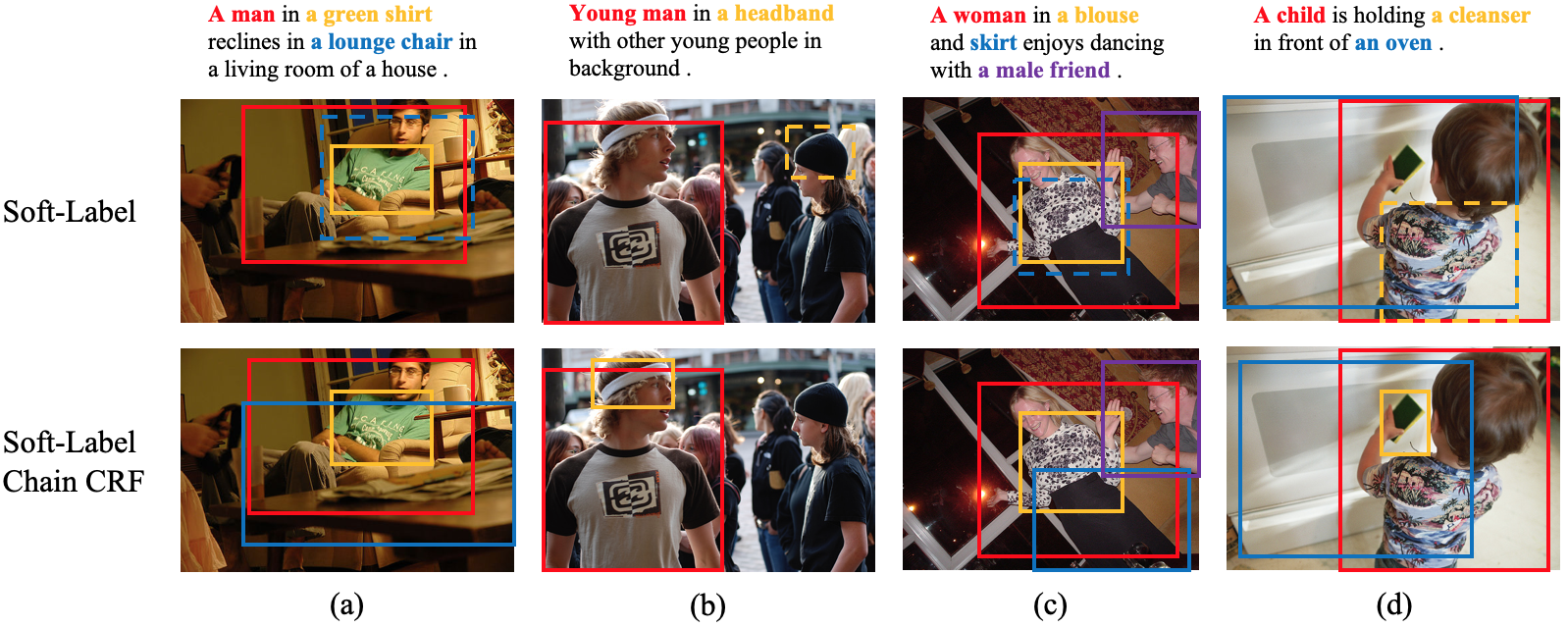}
    \caption{Selected visualization of phrase grounding results in the validation set of Flickr30k Entities. Solid boxes are correct predicted groundings, while dashed boxes are incorrect predicted groundings. Gold regions are not shown. Each entity phrase and its predicted grounding are marked with same color. Best viewed in color. }
    \label{fig:qual}
\end{figure*}

Without bounding box regression, the Soft-Label Chain CRF model has an accuracy of 69.85\%, a 4.84\% reduction compared to the setting with bounding box regression. 

\subsection{Qualitative Results}

We visualize some phrase grounding results in the validation set of Flickr30k Entities in Figure~\ref{fig:qual}. In (a), our CRF model avoids the error in grounding ``a lounge chair'' by constraining its relative position to ``a man''. In (b), although it may not have learned to distinguish ``headband'' and ``hat'', the CRF constrains the spatial position of ``headband'' to agree with the ownership dependency provided in context. In (c), it avoids the error in grounding ``skirt'' by spatially discriminating it from ``a blouse''. In (d), it avoids the error in grounding ``a cleanser'' by constraining its relative size w.r.t. ``a child''. These examples indicate that the CRF model may avoid grounding errors made by non-CRF models by leveraging entity dependencies, including relative position, spatial overlapping, and relative size. 

\section{Conclusion}
\label{sec:conclusion}

In this paper, we formulate phrase grounding as a sequence labeling task and propose the Soft-Label Chain CRF model that successfully combines the benefits brought by global structured prediction and soft-label training regime that addresses the gold label multiplicity problem. Experimental results show that we achieve an overall improvement of 2.48\% on grounding accuracy compared to a strong baseline, and that our model outperforms previous methods on phrase grounding. 

\section{Acknowledgements}
\label{sec:ack}
This material is based upon work supported by the National Science Foundation under Grants No. 1405883 and 1563727. Any opinions, findings, and conclusions or recommendations expressed in this material are those of the author(s) and do not necessarily reflect the views of the National Science Foundation. 

\bibliographystyle{acl_natbib}
\bibliography{emnlp-ijcnlp-2019}

\end{document}


\maketitle

\section{Correctness of the Modified Forward Algorithm for Soft-Label Chain CRFs}
\label{sec:fwd}

\begin{algorithm*}
\caption{Modified forward algorithm to compute the KL divergence loss for Soft-Label Chain CRFs}
\begin{algorithmic}
\Procedure{SoftLabelChainCrfLoss}{$\bm{q}, \eps(y^t, \bm{x}), \tau(y^t, y^{t-1}, \bm{x})$}
	\ForAll {label $y^0$}
		\State $\alpha^0_{y^0} \gets 0$
		\State $g^0_{y^0} \gets 0$
	\EndFor
	\For {$t = 1 \hdots T$}
		\ForAll {label $y^t$}
			\State $\alpha^t_{y^t} \gets \sum_{y^{t-1}}{\Big\{\alpha^{t-1}_{y^{t-1}} \exp{\big[\tau(y^t, y^{t-1}, \bm{x}) + \eps(y^t, \bm{x})\big]}\Big\}}$
			\State $g^t_{y^t} \gets \sum_{y^{t-1}}{\Big\{\big[g^{t-1}_{y^{t-1}} + \tau(y^t, y^{t-1}, \bm{x})\big] q^{t-1}_{y^{t-1}} + \big[\eps(y^t, \bm{x}) - \log{q^t_{y^t}}\big]\Big\}}$
		\EndFor
	\EndFor
	\State $Z \gets \sum_{y^T}{\alpha^T_{y^T}}$
	\State $G \gets \sum_{y^T}{g^T_{y^T} q^T_{y^T}}$
	\State $L \gets -G + \log{Z}$
	\State \Return $L$
\EndProcedure
\end{algorithmic}
\label{alg:fwd}
\end{algorithm*}

\begin{theorem}
    The modified forward algorithm computes the KL divergence loss for Soft-Label Chain CRFs. 
\end{theorem}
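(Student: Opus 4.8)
The plan is to make precise what ``the KL divergence loss'' is, decompose it into a log-partition term, an expected-energy term, and an entropy term, and then verify that the algorithm's two accumulators $\alpha$ and $g$ compute exactly the first and (the negated sum of the) last two, so that $L=-G+\log Z$ is the KL divergence. Write the Soft-Label Chain CRF as $p(\bm y\mid\bm x)=\tfrac1Z\exp\!\big[\sum_{t=1}^T(\tau(y^t,y^{t-1},\bm x)+\eps(y^t,\bm x))\big]$ with $Z=\sum_{\bm y}\exp[\cdot]$, where $y^0$ is a fixed start symbol, and let the soft label $\bm q$ be the product distribution $q(\bm y)=\prod_{t=1}^T q^t_{y^t}$ (equivalently a point mass $q^0$ on the start symbol times $\prod_{t=1}^T q^t$). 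Since $\bm q$ is fixed, expanding the definition gives
\[
\mathrm{KL}(\bm q\,\Vert\,p)=\sum_{\bm y}q(\bm y)\log q(\bm y)-\sum_{\bm y}q(\bm y)\log p(\bm y)=-H(\bm q)-\mathbb E_{\bm y\sim\bm q}\Big[\textstyle\sum_{t=1}^T(\tau+\eps)\Big]+\log Z,
\]
with $H(\bm q)=\sum_{t=1}^T H(q^t)$ because $\bm q$ factorizes. It therefore suffices to prove (i) the algorithm's $Z$ equals the CRF normalizer, and (ii) $G=\mathbb E_{\bm y\sim\bm q}[\sum_{t=1}^T(\tau+\eps)]+H(\bm q)$; combining the two with $L=-G+\log Z$ then yields $L=\mathrm{KL}(\bm q\Vert p)$.

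For (i) I would run the usual forward-algorithm induction on $t$: the invariant is $\alpha^t_{y^t}=\sum_{y^0,\dots,y^{t-1}}\exp\!\big[\sum_{s=1}^t(\tau(y^s,y^{s-1},\bm x)+\eps(y^s,\bm x))\big]$, whose base case is $\alpha^0\equiv 1$ and whose inductive step falls out of the $\alpha$-recursion by distributing the outer sum over $y^{t-1}$ through the exponential. Summing the invariant over $y^T$ gives $Z=\sum_{\bm y}\exp[\sum_{t=1}^T(\tau+\eps)]$, the partition function.

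The heart of the argument is (ii): I claim that for every $t$ and every $y^t$,
\[
g^t_{y^t}=\sum_{y^0,\dots,y^{t-1}}\Big(\prod_{s=0}^{t-1}q^s_{y^s}\Big)\Big[\textstyle\sum_{s=1}^{t}\big(\tau(y^s,y^{s-1},\bm x)+\eps(y^s,\bm x)-\log q^s_{y^s}\big)\Big],
\]
i.e.\ $g^t_{y^t}$ is the expectation, over the prefix $y^{0:t-1}\sim\prod_{s<t}q^s$, of the energy-minus-$\log q$ accumulated through position $t$ (with $y^t$ held at its index). The base case $g^0\equiv0$ is the empty sum. For the inductive step I substitute the hypothesis for $g^{t-1}_{y^{t-1}}$ into the $g$-recursion $g^t_{y^t}=\sum_{y^{t-1}}q^{t-1}_{y^{t-1}}\big(g^{t-1}_{y^{t-1}}+\tau(y^t,y^{t-1},\bm x)\big)+\eps(y^t,\bm x)-\log q^t_{y^t}$ and use $\sum_{y^0,\dots,y^{t-1}}\prod_{s<t}q^s_{y^s}=1$ to pull the position-$t$ terms $\eps(y^t,\bm x)-\log q^t_{y^t}$ under the prefix sum, which exactly rebuilds the $s=t$ summand of the invariant. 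Finally, weighting $g^T_{y^T}$ by $q^T_{y^T}$ and summing over $y^T$ turns the conditional expectation into the full one: $G=\sum_{y^T}g^T_{y^T}q^T_{y^T}=\mathbb E_{\bm y\sim\bm q}[\sum_{t=1}^T(\tau+\eps)]-\sum_{t=1}^T\mathbb E_{y^t\sim q^t}[\log q^t_{y^t}]=\mathbb E_{\bm y\sim\bm q}[\sum_{t=1}^T(\tau+\eps)]+H(\bm q)$, which is (ii). In particular the $-\log q^t_{y^t}$ terms are exactly what promote the cross-entropy $\log Z-\mathbb E_{\bm q}[\sum_t(\tau+\eps)]$ to the full KL by supplying $H(\bm q)$.

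I expect the bookkeeping in (ii) to be the main obstacle. Unlike the ordinary forward recursion, the accumulator $g$ is averaged against $\bm q$ rather than carried along multiplicatively with the $\alpha$'s, so one has to be careful that the newly introduced position-$t$ contribution $\eps(y^t,\bm x)-\log q^t_{y^t}$ enters with unit weight (this is where $\sum_{y^{t-1}}q^{t-1}_{y^{t-1}}=1$ is used) and that the entropy pieces $-\log q^s_{y^s}$ are each counted once along the chain. A secondary point worth stating explicitly is the start-symbol convention: taking $q^0$ to be a point mass makes $H(q^0)=0$ and aligns the index ranges of both inductions with the energy sum $\sum_{t=1}^T$, so no boundary term is lost or double-counted.
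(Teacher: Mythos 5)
Your proof is correct and follows essentially the same route as the paper: the standard forward induction for $Z$, plus an induction showing that $g$ accumulates the $\bm q$-expected scores together with the $-\log q$ entropy terms, combined via $L=-G+\log Z$. Your invariant is phrased as a full prefix expectation $\mathbb{E}_{y^{0:t-1}\sim\prod_{s<t}q^s}$, which under the factorized $\bm q$ is equivalent to the paper's already-marginalized pairwise form, and your explicit KL decomposition simply spells out what the paper asserts as the target identity for $G$.
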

\begin{proof}
    It is known that the iterations on forward variables $\alpha$ computes partition function $Z(\bm{x})$. We only need to prove that
    \begin{align*}
        G
            &= \sum_t{\sum_{y^t, y^{t-1}}{q(y^t, y^{t-1} | \bm{x}) s(y^t, y^{t-1}, \bm{x})}} \nonumber \\
            &\qquad - \sum_t{\sum_{y^t}{q(y^t | \bm{x}) \log{q(y^t | \bm{x})}}}
    \end{align*}
    We first prove by induction that
    \begin{align*}
        g^t_{y^t}
            &= \sum_{t' < t}{\sum_{y^{t'}, y^{t'-1}}{q(y^{t'}, y^{t'-1} | \bm{x}) s(y^{t'}, y^{t'-1}, \bm{x})}} \nonumber \\
            &\qquad - \sum_{t' < t}{\sum_{y^{t'}}{q(y^{t'} | \bm{x}) \log{q(y^{t'} | \bm{x})}}} \nonumber \\
            &\qquad + \eps(y^t, \bm{x}) - \log{q^t_{y^t}}
    \end{align*}
    \textbf{Base Case:} When $t = 0$, 
    \begin{align*}
        g^0_{y^0}
            &= 0
    \end{align*}
    \textbf{Induction:} By inductive hypothesis, we have
    \begin{align*}
        g^{t-1}_{y^{t-1}}
            &= \sum_{t' < t-1}{\sum_{y^{t'}, y^{t'-1}}{q(y^{t'}, y^{t'-1} | \bm{x}) s(y^{t'}, y^{t'-1}, \bm{x})}} \nonumber \\
            &\qquad - \sum_{t' < t-1}{\sum_{y^{t'}}{q(y^{t'} | \bm{x}) \log{q(y^{t'} | \bm{x})}}} \nonumber \\
            &\qquad + \eps(y^{t-1}, \bm{x}) - \log{q^{t-1}_{y^{t-1}}}
    \end{align*}
    Therefore, 
    \begin{align*}
        g^t_{y^t}
            &= \sum_{y^{t-1}}{[g^{t-1}_{y^{t-1}} + \tau(y^t, y^{t-1}, \bm{x})] q^{t-1}_{y^{t-1}}} \nonumber \\
            &\qquad + \eps(y^t, \bm{x}) - \log{q^t_{y^t}} \\
            &= \sum_{t' < t-1}{\sum_{y^{t'}, y^{t'-1}}{q(y^{t'}, y^{t'-1} | \bm{x}) s(y^{t'}, y^{t'-1}, \bm{x})}} \nonumber \\
            &\qquad - \sum_{t' < t-1}{\sum_{y^{t'}}{q(y^{t'} | \bm{x}) \log{q(y^{t'} | \bm{x})}}} \nonumber \\
            &\qquad + \sum_{y^{t-1}}{[s(y^t, y^{t-1}, \bm{x}) - \log{q^{t-1}_{y^{t-1}}}] q^{t-1}_{y^{t-1}}} \nonumber \\
            &\qquad + \eps(y^t, \bm{x}) - \log{q^t_{y^t}} \\
            &= \sum_{t' < t}{\sum_{y^{t'}, y^{t'-1}}{q(y^{t'}, y^{t'-1} | \bm{x}) s(y^{t'}, y^{t'-1}, \bm{x})}} \nonumber \\
            &\qquad - \sum_{t' < t}{\sum_{y^{t'}}{q(y^{t'} | \bm{x}) \log{q(y^{t'} | \bm{x})}}} \nonumber \\
            &\qquad + \eps(y^t, \bm{x}) - \log{q^t_{y^t}}
    \end{align*}
    By following a similar derivation in the induction step, we have
    \begin{align*}
        G
            &= \sum_{y^{T}}{g^{T}_{y^{T}} q^{T}_{y^{T}}} \\
            &= \sum_{t' < T}{\sum_{y^{t'}, y^{t'-1}}{q(y^{t'}, y^{t'-1} | \bm{x}) s(y^{t'}, y^{t'-1}, \bm{x})}} \nonumber \\
            &\qquad - \sum_{t' < T}{\sum_{y^{t'}}{q(y^{t'} | \bm{x}) \log{q(y^{t'} | \bm{x})}}} \nonumber \\
            &\qquad + \sum_{y^{T}}{[s(y^T, y^{T-1}, \bm{x}) - \log{q^{T}_{y^{T}}}] q^{T}_{y^{T}}} \nonumber \\
            &= \sum_t{\sum_{y^t, y^{t-1}}{q(y^t, y^{t-1} | \bm{x}) s(y^t, y^{t-1}, \bm{x})}} \nonumber \\
            &\qquad - \sum_t{\sum_{y^t}{q(y^t | \bm{x}) \log{q(y^t | \bm{x})}}}
    \end{align*}
\end{proof}

\section{Equivalence of Forward-Backward and Backpropagation in Soft-Label Chain CRFs}
\label{sec:back}

\begin{lemma}
    Given forward and backward variables and partition function computed by the forward-backward algorithm
    \begin{align*}
        \alpha^t_{y^t}
            &= \sum_{y^{t-1}}{\alpha^{t-1}_{y^{t-1}} \exp{s(y^t, y^{t-1}, \bm{x})}} \\
        \beta^{t-1}_{y^{t-1}}
            &= \sum_{y^t}{\beta^t_{y^t} \exp{s(y^t, y^{t-1}, \bm{x})}} \\
        Z(\bm{x})
            &= \sum_{y^T}{\alpha^T_{y^T}}
    \end{align*}
    We have
    \begin{align*}
        \frac{\partial{Z(\bm{x})}}{\partial{\alpha^t_{y^t}}}
            &= \beta^t_{y^t}
    \end{align*}
\end{lemma}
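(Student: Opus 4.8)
The plan is to induct on $t$, running downward from $t = T$ to $t = 1$, and at each step invoke the chain rule, exploiting the fact that $Z(\bm{x})$ depends on the layer-$(t-1)$ variables $\{\alpha^{t-1}_{y^{t-1}}\}$ only through the next-layer variables $\{\alpha^t_{y^t}\}$: the forward recursion $\alpha^t_{y^t} = \sum_{y^{t-1}} \alpha^{t-1}_{y^{t-1}} \exp s(y^t, y^{t-1}, \bm{x})$ is the unique place where $\alpha^{t-1}$ enters, and every later recursion together with the final sum $Z = \sum_{y^T} \alpha^T_{y^T}$ involves only layers $\ge t$.

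For the \textbf{base case} $t = T$, I would differentiate $Z(\bm{x}) = \sum_{y^T} \alpha^T_{y^T}$ directly to get $\partial Z/\partial \alpha^T_{y^T} = 1$, which matches the standard backward initialization $\beta^T_{y^T} = 1$. For the \textbf{inductive step}, assuming $\partial Z / \partial \alpha^t_{y^t} = \beta^t_{y^t}$ for every label $y^t$, I would write, using the chain rule and the dependency structure noted above,
\[
\frac{\partial Z(\bm{x})}{\partial \alpha^{t-1}_{y^{t-1}}}
   = \sum_{y^t} \frac{\partial Z(\bm{x})}{\partial \alpha^t_{y^t}}\,\frac{\partial \alpha^t_{y^t}}{\partial \alpha^{t-1}_{y^{t-1}}}.
\]
Since the exponential factors $\exp s(y^t, y^{t-1}, \bm{x})$ do not depend on any $\alpha$, the forward recursion gives $\partial \alpha^t_{y^t}/\partial \alpha^{t-1}_{y^{t-1}} = \exp s(y^t, y^{t-1}, \bm{x})$, and substituting the inductive hypothesis yields $\sum_{y^t} \beta^t_{y^t} \exp s(y^t, y^{t-1}, \bm{x})$, which is exactly $\beta^{t-1}_{y^{t-1}}$ by the backward recursion.

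The only delicate point — and the step I would be most careful about — is making precise which quantities are held fixed in each partial derivative: one must regard $Z$ as a function of $(\alpha^t_{y^t})_{y^t}$ with $s$ held fixed, and precompose it with the affine map $(\alpha^{t-1}_{y^{t-1}})_{y^{t-1}} \mapsto (\alpha^t_{y^t})_{y^t}$, so that the "partial" derivative with respect to $\alpha^{t-1}_{y^{t-1}}$ is really the total derivative through all layers above $t-1$. Once this bookkeeping is set up, the computation is immediate. An alternative, equivalent route would be a short upward induction showing $Z(\bm{x}) = \sum_{y^t} \alpha^t_{y^t} \beta^t_{y^t}$ for every $t$, with the $\beta^t$ coefficients independent of the layer-$t$ $\alpha$'s, from which the derivative can be read off directly; I would still prefer the chain-rule induction since it mirrors exactly how backpropagation traverses the computation graph, which is the point of the lemma.
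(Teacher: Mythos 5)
Your proposal is correct and follows essentially the same argument as the paper: a downward induction on $t$ with base case $\partial Z/\partial \alpha^T_{y^T} = 1 = \beta^T_{y^T}$ and an inductive step applying the chain rule through the forward recursion to recover the backward recursion. Your added care about which variables are held fixed, and the alternative via $Z(\bm{x}) = \sum_{y^t} \alpha^t_{y^t} \beta^t_{y^t}$, are fine but not needed beyond what the paper does.
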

\begin{proof}
    By induction on $t$. \\
    \textbf{Base case:} When $t = T$, 
    \begin{align*}
        \frac{\partial{Z(\bm{x})}}{\partial{\alpha^T_{y^T}}}
            &= 1 = \beta^T_{y^T}
    \end{align*}
    \textbf{Induction:} 
    \begin{align*}
        \frac{\partial{Z(\bm{x})}}{\partial{\alpha^{t-1}_{y^{t-1}}}}
            &= \sum_{y^t}{\frac{\partial{Z(\bm{x})}}{\partial{\alpha^t_{y^t}}} \frac{\partial{\alpha^t_{y^t}}}{\partial{\alpha^{t-1}_{y^{t-1}}}}} \\
            &= \sum_{y^t}{\beta^t_{y^t} \exp{s(y^t, y^{t-1}, \bm{x})}} \\
            &= \beta^{t-1}_{y^{t-1}}
    \end{align*}
\end{proof}

\begin{theorem}
    Backpropagation on the loss of Soft-Label Chain CRFs gives gradient
    \begin{align*}
        \frac{\partial{L}}{\partial{s(y^t, y^{t-1}, \bm{x})}}
            &= -q(y^t, y^{t-1} | \bm{x}) + p(y^t, y^{t-1} | \bm{x})
    \end{align*}
    where
    \begin{align*}
        p(y^t, y^{t-1} | \bm{x})
            &= \frac{1}{Z(\bm{x})} \alpha^{t-1}_{y^{t-1}} \beta^t_{y^t} \exp{s(y^t, y^{t-1}, \bm{x})}
    \end{align*}
\end{theorem}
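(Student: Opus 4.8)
The plan is to differentiate the closed form of the loss directly, using the Lemma for the partition-function term. By the first theorem, $L = -G + \log Z(\bm{x})$, so
\begin{align*}
L &= \log Z(\bm{x}) - \sum_t \sum_{y^t, y^{t-1}} q(y^t, y^{t-1} \mid \bm{x})\, s(y^t, y^{t-1}, \bm{x}) \\
&\qquad + \sum_t \sum_{y^t} q(y^t \mid \bm{x}) \log q(y^t \mid \bm{x}).
\end{align*}
I treat each edge score $s(y^t, y^{t-1}, \bm{x})$, for a fixed position $t$ and a fixed pair of labels, as an independent real variable and compute $\partial L / \partial s(y^t, y^{t-1}, \bm{x})$ term by term.

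The two non-partition terms are immediate. The entropy term $\sum_t \sum_{y^t} q(y^t \mid \bm{x}) \log q(y^t \mid \bm{x})$ does not depend on $s$, so it contributes $0$. In the bilinear term, the chosen variable $s(y^t, y^{t-1}, \bm{x})$ occurs in exactly one summand, so this term contributes $-q(y^t, y^{t-1} \mid \bm{x})$.

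The substantive step is $\partial \log Z(\bm{x}) / \partial s(y^t, y^{t-1}, \bm{x})$, handled by the chain rule through the forward variables $\alpha^t$. The structural observation is that the edge score at position $t$ enters the recursion $\alpha^{t'}_{y^{t'}} = \sum_{y^{t'-1}} \alpha^{t'-1}_{y^{t'-1}} \exp s(y^{t'}, y^{t'-1}, \bm{x})$ only when $t' = t$, and there only in the single summand indexed by $y^{t'-1} = y^{t-1}$ and $y^{t'} = y^t$; furthermore $\alpha^{t-1}_{y^{t-1}}$ involves only scores at positions at most $t-1$ and is therefore constant in $s(y^t, y^{t-1}, \bm{x})$. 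Hence $\partial \alpha^t_{\tilde{y}^t} / \partial s(y^t, y^{t-1}, \bm{x})$ equals $\alpha^{t-1}_{y^{t-1}} \exp s(y^t, y^{t-1}, \bm{x})$ when $\tilde{y}^t = y^t$ and vanishes otherwise. Writing $\partial Z / \partial s(y^t, y^{t-1}, \bm{x}) = \sum_{\tilde{y}^t} \big(\partial Z / \partial \alpha^t_{\tilde{y}^t}\big)\big(\partial \alpha^t_{\tilde{y}^t} / \partial s(y^t, y^{t-1}, \bm{x})\big)$ and using the Lemma to replace $\partial Z / \partial \alpha^t_{y^t}$ by $\beta^t_{y^t}$, the sum collapses to $\beta^t_{y^t}\, \alpha^{t-1}_{y^{t-1}} \exp s(y^t, y^{t-1}, \bm{x})$. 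Dividing by $Z(\bm{x})$ gives $\partial \log Z(\bm{x}) / \partial s(y^t, y^{t-1}, \bm{x}) = p(y^t, y^{t-1} \mid \bm{x})$, and adding the three contributions yields the claim.

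I expect the only delicate point --- the one worth spelling out --- is the bookkeeping of which forward variables depend on the selected edge score: once it is established that $s(y^t, y^{t-1}, \bm{x})$ touches only $\alpha^t_{y^t}$ directly, with $\alpha^{t-1}_{y^{t-1}}$ held fixed, the Lemma supplies everything else and the computation is routine. If distinct edge scores are tied through shared model parameters, the gradient with respect to those parameters follows by one further chain-rule summation over positions, but that lies outside the stated result.
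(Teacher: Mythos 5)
Your proof is correct and takes essentially the same route as the paper: differentiate $L = -G + \log Z(\bm{x})$, with the $-G$ part contributing $-q(y^t, y^{t-1} \mid \bm{x})$ and the $\log Z$ part handled by the chain rule through $\alpha^t_{y^t}$ together with the Lemma $\partial Z(\bm{x})/\partial \alpha^t_{y^t} = \beta^t_{y^t}$. The only difference is presentational: the paper wraps the computation in an induction on $t$ whose base case $t = T$ merely uses $\beta^T_{y^T} = 1$ and whose step never invokes the inductive hypothesis, whereas you treat all positions uniformly and also spell out why the $-G$ term yields exactly $-q(y^t, y^{t-1} \mid \bm{x})$, a step the paper asserts without comment.
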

\begin{proof}
    By induction on $t$. \\
    \textbf{Base case:} When $t = T$, 
    \begin{align*}
        &\frac{\partial{L}}{\partial{s(y^T, y^{T-1}, \bm{x})}} \\
            &= -q(y^T, y^{T-1} | \bm{x}) + \frac{1}{Z(\bm{x})} \frac{\partial{Z(\bm{x})}}{\partial{s(y^T, y^{T-1}, \bm{x})}} \\
            &= -q(y^T, y^{T-1} | \bm{x}) + \frac{\alpha^{T-1}_{y^{T-1}} \exp{s(y^T, y^{T-1}, \bm{x})}}{Z(\bm{x})} \\
            &= -q(y^T, y^{T-1} | \bm{x}) + p(y^T, y^{T-1} | \bm{x})
    \end{align*}
    \textbf{Induction:} 
    \begin{align*}
        &\frac{\partial{L}}{\partial{s(y^t, y^{t-1}, \bm{x})}} \\
            &= -q(y^t, y^{t-1} | \bm{x}) + \frac{1}{Z(\bm{x})} \frac{\partial{Z(\bm{x})}}{\partial{s(y^t, y^{t-1}, \bm{x})}} \\
            &= -q(y^t, y^{t-1} | \bm{x}) + \frac{1}{Z(\bm{x})} \frac{\partial{Z(\bm{x})}}{\partial{\alpha^t_{y^t}}} \frac{\partial{\alpha^t_{y^t}}}{\partial{s(y^t, y^{t-1}, \bm{x})}} \\
            &= -q(y^t, y^{t-1} | \bm{x}) + \frac{\beta^t_{y^t} \alpha^{t-1}_{y^{t-1}} \exp{s(y^t, y^{t-1}, \bm{x})}}{Z(\bm{x})} \\
            &= -q(y^t, y^{t-1} | \bm{x}) + p(y^t, y^{t-1} | \bm{x})
    \end{align*}
\end{proof}